\newtheorem{theorem}{Theorem}
\newtheorem{remark}{Remark}
\title{Variance-Bounded Evaluation of Entity-Centric AI Systems Without Ground Truth: Theory and Measurement}
\author{
  Kaihua Ding \\
  University of Pennsylvania \\
  \texttt{dkaihua@upenn.edu}
}
\date{\today}
\begin{document}

\maketitle

\begin{abstract}
Reliable evaluation of AI systems remains a fundamental challenge when ground truth labels are unavailable, particularly for systems generating natural language outputs like AI chat and agent systems. Many of these AI agents and systems focus on entity-centric tasks. In enterprise contexts, organizations deploy AI systems for entity linking, data integration, and information retrieval where verification against gold standards is often infeasible due to proprietary data constraints. Academic deployments face similar challenges when evaluating AI systems on specialized datasets with ambiguous criteria. Conventional evaluation frameworks, rooted in supervised learning paradigms, fail in such scenarios where single correct answers cannot be defined. We introduce VB-Score, a variance-bounded evaluation framework for entity-centric AI systems that operates without ground truth by jointly measuring effectiveness and robustness. Given system inputs, VB-Score enumerates plausible interpretations through constraint relaxation and Monte Carlo sampling, assigning probabilities that reflect their likelihood. It then evaluates system outputs by their expected success across interpretations, penalized by variance to assess robustness of the system. We provide formal theoretical analysis establishing key properties—including range, monotonicity, and stability—along with concentration bounds for Monte Carlo estimation. Through case studies on AI systems with ambiguous inputs, we demonstrate that VB-Score reveals robustness differences hidden by conventional evaluation frameworks, offering a principled measurement framework for assessing AI system reliability in label-scarce domains.

\end{abstract}

\section{Introduction}
\label{s:intro}
Evaluating AI systems that generate natural language outputs—such as chat or agent models—poses fundamental measurement challenges when ground truth labels are unavailable, costly to obtain, or unreliable. In practice, many enterprise and research applications, including entity linking, data integration, and information retrieval, operate under conditions where gold-standard verification is infeasible due to proprietary data, limited annotation budgets, specialized domain expertise requirements, or the subjective nature of outputs.

Consider AI systems deployed for enterprise data integration, where organizations frequently integrate data acquired from multiple vendors—pseudonymized or anonymized entity records spanning user transactions, browsing histories, and operational logs. AI systems must reconcile heterogeneous sources with ambiguous and inconsistent schemas, yet verification of the resulting data products is frequently infeasible even after extensive processing. Similarly, AI chat systems deployed for customer service must handle ambiguous user queries where multiple valid interpretations exist, but obtaining ground truth labels for every possible user intent is impractical.

In academic contexts, AI systems are increasingly deployed for specialized tasks such as linking research abstracts to publications, analyzing scientific literature, or processing restricted datasets. These applications expose fundamental evaluation challenges: linked entity names may be shared by multiple individuals; dataset references may correspond to different domains or versions, and literature might use identical abbreviations but with field-dependent interpretations. Most critically, unless validated through extensive manual verification, the accuracy of AI system outputs for such tasks, especially cutting-edge academic research text, remains uncertain.

These examples illustrate that AI system evaluation is both critical and challenging, yet often lacks reliable ground truth. Users typically interact with AI systems through natural language queries that may be ambiguous (e.g., ``electronic health records Dr. John Smith'') or contain partially incorrect details (e.g., wrong employer or year). When such queries are processed by AI chat or agent systems, conventional evaluation frameworks—which assume a single correct answer—become ill-posed. In practice, even human assessors may be unable to specify unique ground truth, and users themselves may be uncertain of their intended meaning.

Several classic query response and information retrieval frameworks exist for natural language text evaluation. The classic Cranfield paradigm \citep{cleverdon1967cranfield}, which underpins modern information retrieval evaluation, relies on expert-labeled relevance judgments—an approach that is costly and impractical in domains requiring specialized or proprietary knowledge. Existing Named Entity Linking (NEL) and Named Entity Recognition (NER) benchmarks (e.g., ACE, TAC KBP, CoNLL) \citep{tjong2003conll} evaluate precision, recall, and F1 under strict supervision, but they fail to capture \emph{real-world} tasks where criteria are ambiguous, incomplete, or undefined. More recently, some AI systems are evaluated using human annotation-based Elo ratings~\cite{chiang2024chatbot}, which, while popular, are also expensive and difficult to scale. Recent work has explored automating evaluation with large language models (LLMs) \citep{fabbri2021summeval}, but these methods remain fragile: LLMs may fail on tasks without ground truth and may never have seen restricted or rare datasets during training.

This paper introduces a variance-bounded evaluation framework for entity-centric AI systems (VB-Score), where both the prompt and response contain entities—a common scenario in both industry and academic applications. Instead of assuming one ground truth, we enumerate a set of plausible interpretations $\mathcal{I}(Q)=\{I_1,\ldots,I_n\}$ for a system input prompt $Q$, assign probabilities $P(I_i\mid Q)$ reflecting plausibility, and evaluate the AI system's output by expected success across interpretations. We further quantify \emph{robustness} through a variance term that penalizes systems that perform well only on a narrow subset of plausible intents, thereby rewarding consistent performance across diverse scenarios.

Our contributions are:
\begin{itemize}[leftmargin=1.25em]
  \item A new problem formulation for evaluating entity-centric AI systems when output criteria are incomplete\textbf{/}ambiguous and ground truth is unavailable.
  \item \emph{VB-Score}: an unsupervised, normalized metric that computes expected success across plausible interpretations and includes a variance penalty to measure robustness.
  \item Formal theoretical analysis of VB-Score properties, including range, monotonicity, and stability.
  \item Case studies on entity-centric tasks demonstrating how VB-Score reveals AI system robustness, a metric not provided by conventional ground-truth-focused evaluation methodologies.
\end{itemize}

\section{Related Work}
\label{s:related_work}

\paragraph{Evaluation Without Ground Truth.}
The challenge of evaluating systems that generate natural language text without ground truth has been explored across multiple domains. \citet{zafarani2015evaluation} proposed methods for assessing models in social media research where labeled data are unavailable. Recent work has examined model explanations \citep{rawal2025evaluating}, clinical AI systems under uncertain ground truth \citep{kiyasseh2024evaluating,stutz2023evaluating}, and entity disambiguation \citep{nanayakkara2025unsupervised,ji2011knowledge}. Our work extends these ideas to evaluating AI chat system responses using constraint relaxation and Monte Carlo sampling.

\paragraph{Robustness in Evaluation.}
The concept of robustness has been studied extensively across domains. \citet{herman2015robustness} define robustness in water systems planning as the ability to perform well under diverse future conditions. \citet{parker2018robustness} associate robustness with stability of decision-making competence over time. In machine learning, robustness is often studied in adversarial contexts, focusing on bias-variance trade-offs \citep{wu2022adversarial}. Our work contributes a principled approach to measuring robustness in entity-centric AI systems under input ambiguity, where variance in performance across plausible interpretations serves as a proxy for system reliability.

\paragraph{Diversified Information Retrieval.}
Our evaluation framework is conceptually related to diversified information retrieval, which aims to present users with results that capture multiple facets of their information needs \citep{clarke2008novelty,sakai2011perintent,agrawal2009diversifying}. These methods rely on intent-aware metrics that evaluate how well a system satisfies distinct user intents. Similarly, entity-centric AI systems must reason over diverse possible interpretations of a query or instruction. VB-Score generalizes this to settings without explicit ground truth, where prompt intents are inferred from input ambiguity rather than predefined labels.

\paragraph{Measurement Foundations.}
Finally, our work aligns with the SIGMETRICS tradition of systematic measurement and rigorous analysis \citep{hodge1982workload,clark1979feature,frachtenberg2022multifactor}. By providing formal theoretical properties (range, monotonicity, stability) and statistically valid confidence intervals, our framework contributes to developing more robust and reliable evaluation methodologies for entity-centric information systems.

\section{The Case for Variance-Based Evaluation}
\label{s:variance_case}

\subsection{What We Always Have: Inputs and Outputs}

Even when ground truth labels are unavailable, AI systems—whether chat interfaces or agent workflows—possess two fundamental, observable components. On the \emph{input side}, there is the user prompt or instruction, containing text, intent signals, and contextual cues. On the \emph{output side}, there is the system's response: generated text, retrieved documents, or task-specific actions. While we may lack definitive gold-standard labels for correctness, these input-output pairs define observable distributions that can be systematically measured and analyzed.

For the input side, we can quantify uncertainty through probability distributions over plausible interpretations, measuring the degree of ambiguity inherent in user queries. For the output side, we can characterize the distribution of system responses through Monte Carlo sampling, capturing variability across multiple runs. Although direct supervised comparison against ground truth is infeasible, analyzing the statistical relationship between input variability and output consistency enables a principled, variance-based evaluation framework.

\subsection{Statistical Foundations}

Our approach draws inspiration from classical statistical inference. In statistics, population characteristics can be estimated without exhaustive enumeration through carefully designed sampling procedures and distributional analysis. We adopt an analogous perspective: treating the space of plausible input interpretations as one population and the space of system outputs as another. By systematically varying the input distribution—for example, by enumerating plausible interpretations of an ambiguous query—we observe corresponding variations in the output distribution. Repeated trials of this process, ideally randomized to avoid systematic bias, allow us to estimate the stability and robustness of system performance.

This motivates our variance-based evaluation framework. Rather than comparing outputs against fixed gold labels (which may not exist), we evaluate systems by characterizing the relationship between input variability and output robustness. A system that performs consistently well across diverse plausible interpretations demonstrates reliability; a system whose performance varies widely across interpretations reveals brittleness. By penalizing variance in performance, our framework rewards systems that are robust to input ambiguity—a critical property for real-world deployment where user intents are often uncertain or underspecified.

\section{Framework}
\label{s:framework}

We call this the \emph{variance-bounded evaluation framework} because it evaluates system performance under intent uncertainty using both the expected success (mean) and its variability (variance). The VB-Score measures the average probability of satisfying a user intent, while the variance penalty bounds this score by penalizing inconsistency across all plausible interpretations.

\subsection{Problem Setup and Notation}

Let $Q$ denote a prompt or system instruction. We specifically focus on queries $Q$ that admit various responses; when $Q$ has a deterministic response or a certain gold label, the evaluation task becomes trivial and no robustness measurement of the system response is needed. Because $Q$ may be ambiguous, underspecified, or partially incorrect, we assume there exists a \emph{set of plausible interpretations} $\mathcal{E}(Q)=\{E_1,\dots,E_n\}$ with a probability vector $\boldsymbol{\pi}(Q)=(\pi_1,\dots,\pi_n)$, where $\pi_i \equiv P(E_i\mid Q)$ and $\sum_i \pi_i=1$. An AI system returns a ranked list $S@k=[d_1,\dots,d_k]$ of responses. We write $\mathrm{rel}(d,E)\in\{0,1\}$ for whether result $d$ is relevant to entity $E$ (e.g., the page \emph{about} $E$, or a document primarily describing $E$).

We conceive of two \emph{observable populations}:
(i) the \textbf{input population} of queries and their intent distributions $\boldsymbol{\pi}(Q)$; and
(ii) the \textbf{output population} of ranked results and their entity assignments $\phi(d)\in\mathcal{E}(Q)$ (obtained via open-world LLM-based entity linking).
Even without gold labels, these two populations admit stable descriptive and inferential statistics.

\subsection{Input-Side: Candidate Distribution}
\label{s:input}

This stage refines the query into a distribution over plausible entities. We construct the candidate set $\mathcal{E}(Q)$ and its probability distribution $\boldsymbol{\pi}(Q)$ in three steps:

\paragraph{(1) Linking \& Scoring to Generate Candidates.} Apply an entity linker or knowledge base-backed candidate generator to $Q$ to produce candidates $\{(E_i, s_i)\}_{i=1}^n$, where each $E_i$ is a candidate entity and $s_i$ is a score. Convert scores $\{s_i\}$ to a probability vector $\boldsymbol{\pi}$ using temperature-scaled softmax:
\[
\pi_i \propto \exp(s_i/T),\quad \sum_i \pi_i=1.
\]
We fix $T=1$ to remain consistent with our label-free evaluation setting.

\paragraph{(2) Constraint Relaxation.} 
When $Q$ specifies attributes that may not all be exactly matched in the knowledge base (KB), we evaluate entities by the \emph{maximally satisfiable subset} of constraints. Let $C=\{c_j\}_{j=1}^m$ be the set of query constraints with weights $w_j \ge 0$. For each candidate entity $E$, define a violation indicator:
\[
\mathbf{1}[\neg c_j(E)] =
\begin{cases}
1, & \text{if $E$ violates constraint $c_j$}, \\
0, & \text{if $E$ satisfies $c_j$}.
\end{cases}
\]
The total violation penalty is:
\[
\Delta(E) = \sum_{j=1}^m w_j \,\mathbf{1}[\neg c_j(E)].
\]
We normalize across the candidate set to obtain a probability distribution:
\[
\pi(E) = \frac{\exp(-\Delta(E))}{\sum_{E'\in \mathcal{E}(Q)} \exp(-\Delta(E'))}.
\]

\paragraph{(3) Ambiguity Coverage \& Deduplication.}
We preserve multiple plausible interpretations but remove negligible and duplicate candidates using explicit rules: truncation (retain candidates using a fixed threshold, top-$K$, or cumulative-mass cutoff), and deduplication (canonicalize candidates through KB identifier mapping, string normalization, and semantic clustering).

\subsection{Output-Side: Tagging and Per-Intent Gains}
\label{s:output}

This stage assesses whether retrieved results cover the plausible entity interpretations. Each retrieved item $d_j$ is re-linked to an entity $\phi(d_j)\in\mathcal{E}(Q)$ using snippets, titles, or landing pages.

Define a per-intent \emph{gain} at cutoff $k$ as:
\[
g_i(S@k) = \max_{1 \le j \le k} \mathbf{1}\{\phi(d_j) = E_i\},
\]
which equals $1$ if at least one result in the top $k$ is about entity $E_i$, and $0$ otherwise. A rank-sensitive variant weights matches by their rank position using discounted cumulative gain (DCG).

\subsection{Variance-Bounded Metric}
\label{s:metric}

Given $(\mathcal{E}(Q),\boldsymbol{\pi}(Q))$ and gains $\{g_i\}$, we define the \emph{expected success} at cutoff $k$:
\[
\mathrm{ES}(Q, S@k) = \sum_{i=1}^n \pi_i(Q)\, g_i(S@k) \in [0,1].
\]
In the binary-gain case $g_i\in\{0,1\}$, this equals the probability that a randomly drawn intent $E_i\sim\boldsymbol{\pi}(Q)$ finds at least one relevant item in the top-$k$.

To incorporate robustness across intents, let $X$ be the Bernoulli success indicator with $\mathbb{E}[X]=\mathrm{ES}(Q,S@k)$. Its variance is:
\[
\mathrm{Var}(X)=\mathrm{ES}(Q,S@k)\bigl(1-\mathrm{ES}(Q,S@k)\bigr).
\]
We define the \textbf{Variance-Bounded Score (VB-Score)}:
\[
\mathrm{VB}_\alpha(Q,S@k)=\mathrm{ES}(Q,S@k)-\alpha\,\sqrt{\mathrm{Var}(X)},\quad \alpha\ge0,
\]
which lies in $[0,1]$ and favors systems that perform consistently across plausible intents.

\subsection{Estimating VB and Uncertainty in Practice}
\label{s:estimation}

In the absence of ground truth, two main sources of uncertainty must be addressed:  
(i) estimation of the intent distribution $\boldsymbol{\pi}(Q)$; and  
(ii) variability induced by paraphrasing, constraint relaxation, and stochasticity in entity linking.  
To quantify these, we adopt a Monte Carlo procedure with $B$ replicas (Algorithm~\ref{alg:vb}).  
Each replica perturbs the input side (query interpretations) and re-tags the system output, yielding a distribution of variance-bounded scores.

Formally, the expected success for prompt/query $Q$ is estimated as
\[
\widehat{\mathrm{ES}}(Q,S@k) \;=\; \frac{1}{B}\sum_{b=1}^{B} \sum_{i=1}^{n_b} \pi^{(b)}_i(Q)\, g^{(b)}_i(S@k),
\]
where replica $b$ produces a candidate set $\mathcal{E}^{(b)}(Q)$, intent probabilities $\boldsymbol{\pi}^{(b)}(Q)$, and re-tagged gains $g^{(b)}_i(S@k)$.  

A nonparametric bootstrap across the $B$ replica scores provides confidence intervals:
\[
\text{CI}_{1-\delta} \;=\; \Bigl[\,\widehat{\mathrm{ES}} \;-\; z_{1-\delta/2}\,\tfrac{\widehat{\sigma}}{\sqrt{B}},\;\;
\widehat{\mathrm{ES}} \;+\; z_{1-\delta/2}\,\tfrac{\widehat{\sigma}}{\sqrt{B}}\,\Bigr],
\]
where $\widehat{\sigma}^2$ is the sample variance of replica scores and $\text{CI}_{1-\delta}$ is the $(1-\delta)$ confidence interval.  

At the \emph{collection level}, with query set $\mathcal{Q}$, we report macro-averaged results:
\[
\mathrm{VB}@k(S) \;=\; \frac{1}{|\mathcal{Q}|}\sum_{Q\in\mathcal{Q}} \widehat{\mathrm{VB}}(Q,S@k),
\]
with CIs obtained by resampling queries. If a small development set with partial labels exists, $\boldsymbol{\pi}(Q)$ can be calibrated (e.g., Platt scaling, isotonic regression), and tagger precision for $\phi(d)$ validated.  
Otherwise, parameters such as the temperature $T$ or constraint weights should be treated as sensitivity knobs, and results reported across a small range of values.

\begin{algorithm}[H]
\SetAlgoLined
\KwIn{Query $Q$, retrieval system $\mathcal{S}$, cutoff $k$, number of replicas $B$}
\KwOut{Estimated VB-Score $\widehat{\mathrm{VB}}(Q,S@k)$ with confidence intervals}
\For{$b \gets 1$ \KwTo $B$}{
  \tcp{Input-side: candidate generation}
  Generate $\mathcal{E}^{(b)}(Q)$ via linking, constraint relaxation, and ambiguity coverage\;
  Compute probability distribution $\boldsymbol{\pi}^{(b)}(Q)$\;

  \tcp{System run and output tagging}
  Run system $\mathcal{S}$ on $Q$ to obtain $S@k$\;
  Tag each $d_j \in S@k$ with entity $\phi^{(b)}(d_j)\in\mathcal{E}^{(b)}(Q)$\;

  \tcp{Replica scoring}
  Compute per-intent gains $g^{(b)}_i(S@k)$\;
  Compute replica score $\mathrm{VB}^{(b)}(Q,S@k)$\;
}
\textbf{Aggregation:} average replica scores and compute bootstrap confidence intervals\;
\[
\widehat{\mathrm{VB}}(Q,S@k) \;=\; \frac{1}{B}\sum_{b=1}^B \mathrm{VB}^{(b)}(Q,S@k).
\]
\caption{Monte Carlo estimation of variance-bounded evaluation for a single prompt/query.}
\label{alg:vb}
\end{algorithm}

The algorithm above formalizes how replicas are generated and aggregated. It emphasizes that robustness is not inferred from a single run but from a distribution of perturbed interpretations. In this way, VB evaluation parallels established resampling methods in statistics, ensuring stability even without ground truth labels.

\subsection{Flowchart Summary}
To complement the algorithmic description, Figure~\ref{fig:vb-nelir-flow-min} depicts the entire framework as a sequential pipeline. The process begins with query metadata, proceeds through candidate generation and intent probability assignment (A), continues with retrieval and tagging (B), evaluates with ES and VB metrics (C), and concludes with Monte Carlo replicas and bootstrap aggregation (D). 

Each stage of the flowchart corresponds directly to a subsection above: 
\begin{itemize}
    \item Block (A) illustrates candidate enumeration, constraint relaxation, and ambiguity handling. 
    \item Block (B) shows how retrieved results are aligned with candidate intents to compute per-intent gains. 
    \item Block (C) captures the transition from gains to ES and VB-Scores, highlighting the role of robustness penalties. 
    \item Block (D) illustrates uncertainty quantification and aggregation into collection-level results.  
\end{itemize}

This sequential diagram underscores that the VB-NEL-IR framework is both modular and reproducible: input interpretation, output tagging, metric computation, and uncertainty aggregation can each be validated and refined independently.

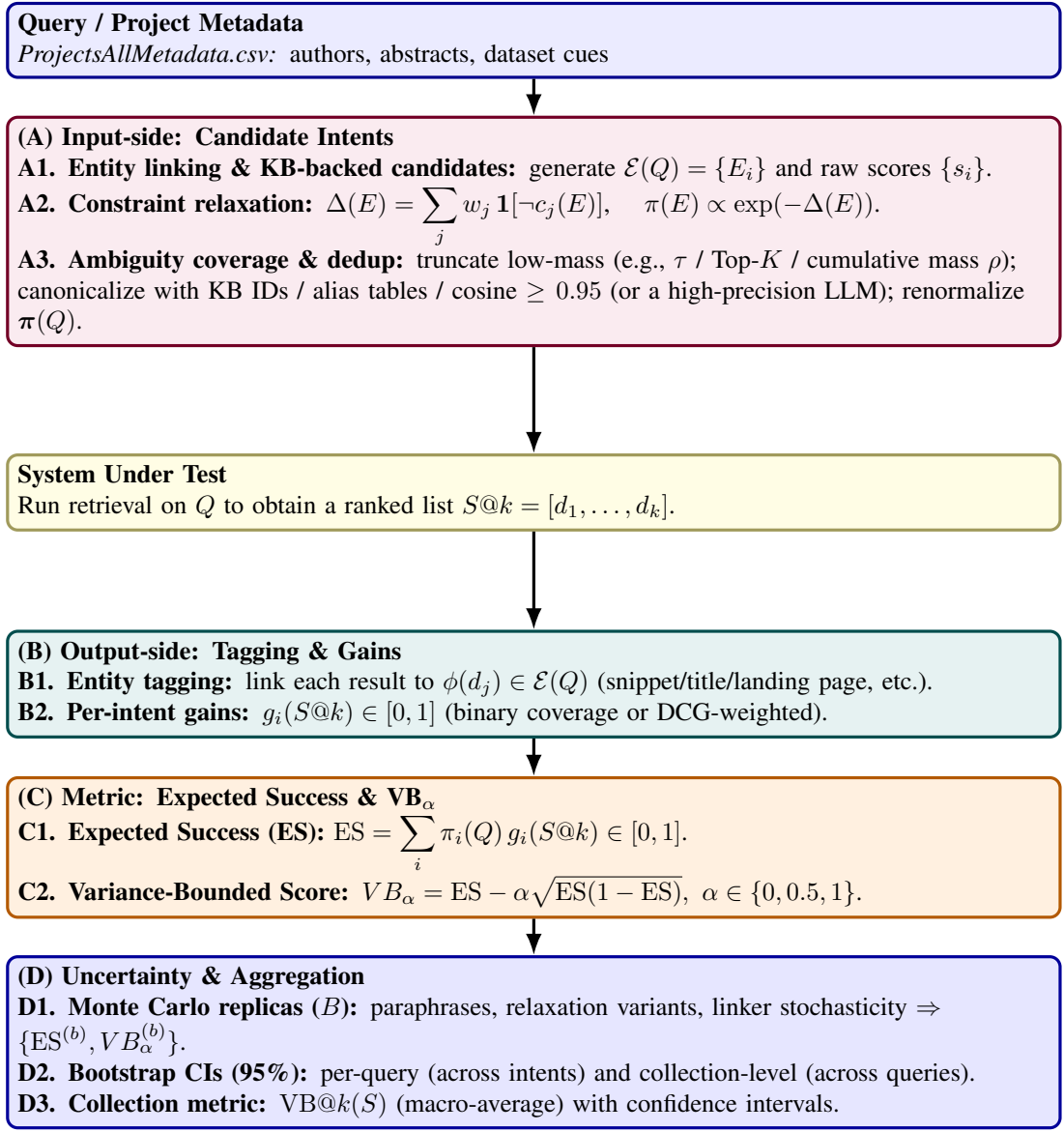
\begin{figure}[H]
\centering
\usetikzlibrary{arrows.meta,positioning}
\begin{tikzpicture}[
  font=\small,
  >=Latex,
  node distance=6mm,
  box/.style={
    draw, very thick, rounded corners, align=left,
    inner sep=4pt, text width=0.86\linewidth
  },
  head/.style={box, fill=blue!8, draw=blue!55!black},
  Ablk/.style={box, fill=purple!8, draw=purple!60!black},
  Sblk/.style={box, fill=yellow!12, draw=yellow!55!black},
  Bblk/.style={box, fill=teal!10, draw=teal!60!black},
  Cblk/.style={box, fill=orange!12, draw=orange!70!black},
  Dblk/.style={box, fill=blue!10, draw=blue!60!black},
  note/.style={font=\scriptsize\itshape, text=black!65}
]

\node[head] (Q) {\textbf{Query / Project Metadata} \\
\emph{ProjectsAllMetadata.csv:} authors, abstracts, dataset cues};

\node[Ablk, below=21mm of Q, anchor=center] (A) {\textbf{(A) Input-side: Candidate Intents} \\
\textbf{A1. Entity linking \& KB-backed candidates:}
generate $\mathcal{E}(Q)=\{E_i\}$ and raw scores $\{s_i\}$. \\
\textbf{A2. Constraint relaxation:}
$\displaystyle \Delta(E)=\sum_j w_j\,\mathbf{1}[\neg c_j(E)]$,
\quad $\pi(E)\propto \exp(-\Delta(E))$. \\
\textbf{A3. Ambiguity coverage \& dedup:}
truncate low-mass (e.g., $\tau$ / Top-$K$ / cumulative mass $\rho$);
canonicalize with KB IDs / alias tables / cosine $\ge 0.95$ (or a high-precision LLM); renormalize $\boldsymbol{\pi}(Q)$.};

\node[Sblk, below=20mm of A, anchor=center] (S) {\textbf{System Under Test} \\
Run retrieval on $Q$ to obtain a ranked list $S@k=[d_1,\ldots,d_k]$.};

\node[Bblk, below=21mm of S, anchor=center] (B) {\textbf{(B) Output-side: Tagging \& Gains} \\
\textbf{B1. Entity tagging:} link each result to $\phi(d_j)\in\mathcal{E}(Q)$ (snippet/title/landing page, etc.). \\
\textbf{B2. Per-intent gains:} $g_i(S@k)\in[0,1]$ (binary coverage or DCG-weighted).};

\node[Cblk, below=15mm of B, anchor=center] (C) {\textbf{(C) Metric: Expected Success \& VB$_\alpha$} \\
\textbf{C1. Expected Success (ES):} $\displaystyle \mathrm{ES}=\sum_i \pi_i(Q)\,g_i(S@k) \in [0,1]$. \\
\textbf{C2. Variance-Bounded Score:} $\displaystyle VB_\alpha=\mathrm{ES}-\alpha\sqrt{\mathrm{ES}(1-\mathrm{ES})},\ \alpha\in\{0,0.5,1\}$.};

\node[Dblk, below=17mm of C, anchor=center] (D) {\textbf{(D) Uncertainty \& Aggregation} \\
\textbf{D1. Monte Carlo replicas ($B$):} paraphrases, relaxation variants, linker stochasticity
$\Rightarrow \{\mathrm{ES}^{(b)}, VB_\alpha^{(b)}\}$. \\
\textbf{D2. Bootstrap CIs (95\%):} per-query (across intents) and collection-level (across queries). \\
\textbf{D3. Collection metric:} $\mathrm{VB}@k(S)$ (macro-average) with confidence intervals.};

\draw[->, very thick] (Q) -- (A);
\draw[->, very thick] (A) -- (S);
\draw[->, very thick] (S) -- (B);
\draw[->, very thick] (B) -- (C);
\draw[->, very thick] (C) -- (D);

\end{tikzpicture}
\caption{Sequential VB-NEL-IR pipeline. Each stage (A--D) corresponds to query interpretation, output tagging, metric computation, and uncertainty aggregation.}
\label{fig:vb-nelir-flow-min}
\end{figure}

\section{Theoretical Properties of VB-Score}
\label{s:theory}

We establish key theoretical properties of VB-Score, demonstrating its validity as a robust evaluation metric. These properties ensure that VB-Score behaves predictably under system improvements, remains stable under uncertainty in intent estimation, and concentrates around its expected value with sufficient sampling.

\subsection{Range and Probabilistic Interpretation}

Our first result establishes that VB-Score is well-defined and admits a natural probabilistic interpretation as the success probability of a Bernoulli trial.

\begin{theorem}[Range and Bernoulli Interpretation]
\label{thm:range}
For any query $Q$, system output $S@k$, and penalty parameter $\alpha \ge 0$:
\begin{enumerate}[label=(\roman*)]
    \item $\mathrm{ES}(Q, S@k) \in [0,1]$ and $\mathrm{VB}_\alpha(Q, S@k) \in [0,1]$.
    \item If $I \sim \boldsymbol{\pi}(Q)$ is a randomly drawn intent and $X = \mathbf{1}\{g_I(S@k) = 1\}$ is the success indicator, then $X \sim \mathrm{Bernoulli}(\mathrm{ES})$ and $\mathrm{Var}(X) = \mathrm{ES}(1 - \mathrm{ES})$.
\end{enumerate}
\end{theorem}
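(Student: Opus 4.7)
My plan is to prove part (ii) first, since the Bernoulli structure it establishes supplies exactly the variance identity needed to justify part (i). For part (ii), I would unpack the definitions directly: the intent $I$ is drawn from the categorical distribution $\boldsymbol{\pi}(Q)$, and in the binary-gain setting $g_i(S@k) \in \{0,1\}$, so $X = \mathbf{1}\{g_I(S@k) = 1\} = g_I(S@k)$ is already a $\{0,1\}$-valued random variable. Conditioning on $I$ and applying the law of total expectation gives
\[
\mathbb{E}[X] \;=\; \sum_{i=1}^n \pi_i(Q)\, g_i(S@k) \;=\; \mathrm{ES}(Q, S@k),
\]
so $X \sim \mathrm{Bernoulli}(\mathrm{ES})$ by definition, and the variance identity $\mathrm{Var}(X) = \mathrm{ES}(1 - \mathrm{ES})$ is the standard Bernoulli formula.

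For part (i), the range of $\mathrm{ES}$ is immediate: it is a convex combination (since $\pi_i \ge 0$ and $\sum_i \pi_i = 1$) of gains in $[0,1]$, hence $\mathrm{ES} \in [0,1]$. The upper bound $\mathrm{VB}_\alpha \le 1$ then follows from $\alpha \sqrt{\mathrm{Var}(X)} \ge 0$, which gives $\mathrm{VB}_\alpha \le \mathrm{ES} \le 1$. For the lower bound, I would study the single-variable function $f(p) = p - \alpha\sqrt{p(1-p)}$ on $[0,1]$ with $p = \mathrm{ES}$: differentiate, solve the resulting quadratic to locate the interior critical point, and reduce the condition $f(p) \ge 0$ to the algebraic inequality $p \ge \alpha^2/(1+\alpha^2)$ (equivalently $p^2 \ge \alpha^2 p(1-p)$).

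The main obstacle is precisely this lower-bound step: for arbitrary $\alpha \ge 0$ and sufficiently small $\mathrm{ES}$, the raw expression $\mathrm{ES} - \alpha\sqrt{\mathrm{ES}(1-\mathrm{ES})}$ can become negative, so the claim $\mathrm{VB}_\alpha \in [0,1]$ is not purely formal and needs a convention or a restriction. I would resolve this by stating one of three routes explicitly before the algebra: (a) restrict to the calibrated regime $\mathrm{ES} \ge \alpha^2/(1+\alpha^2)$; (b) adopt the clipping convention $\mathrm{VB}_\alpha := \max\{\mathrm{ES} - \alpha\sqrt{\mathrm{Var}(X)},\, 0\}$ consistent with interpreting it as a normalized score; or (c) bound $\alpha$ so that the minimum of $f$ over $[0,1]$ is nonnegative. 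Once a convention is fixed, the $\mathrm{VB}_\alpha \ge 0$ conclusion is a one-line algebraic check, and the remainder of the theorem is a direct unfolding of the definitions in Section~\ref{s:metric}.
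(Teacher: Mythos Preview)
Your approach to part (ii) and to the $\mathrm{ES}\in[0,1]$ and $\mathrm{VB}_\alpha\le 1$ portions of part (i) is essentially identical to the paper's: both compute $\Pr(X=1)$ by conditioning on $I$, and both observe that $\mathrm{ES}$ is a convex combination of values in $[0,1]$ while the penalty term is nonnegative.

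Where you diverge is the lower bound $\mathrm{VB}_\alpha\ge 0$, and here your treatment is \emph{more} careful than the paper's. The paper argues that $\sqrt{p(1-p)}\le 1/2$, then asserts that ``for $\mathrm{ES}\in(0,1)$ and $\alpha\le 2$, the penalty is at most $\mathrm{ES}$,'' and finally retreats to ``in practice, we use $\alpha\in[0,1]$.'' The middle assertion is false: at $\mathrm{ES}=0.1$ and $\alpha=1$ the penalty is $\sqrt{0.09}=0.3>0.1$, so $\mathrm{VB}_1=-0.2$. Your sharp threshold $\mathrm{ES}\ge \alpha^2/(1+\alpha^2)$ is the correct characterization, and your explicit acknowledgment that one must either restrict the regime, clip at zero, or bound $\alpha$ (in fact no finite bound on $\alpha$ alone suffices without also controlling $\mathrm{ES}$) is a genuine improvement over the paper's argument. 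Proving (ii) first, as you do, is also a cleaner organization since it supplies the variance formula used in (i).
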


\begin{proof}
(i) Since $0 \le g_i(S@k) \le 1$ for all $i$ and $\sum_{i=1}^n \pi_i = 1$ with $\pi_i \ge 0$, we have
\[
0 \le \mathrm{ES}(Q, S@k) = \sum_{i=1}^n \pi_i g_i(S@k) \le \sum_{i=1}^n \pi_i \cdot 1 = 1.
\]
For VB-Score, note that $\sqrt{p(1-p)} \le 1/2$ for all $p \in [0,1]$, with maximum at $p = 1/2$. Thus, for any $\alpha \ge 0$:
\[
\mathrm{VB}_\alpha(Q, S@k) = \mathrm{ES} - \alpha\sqrt{\mathrm{ES}(1-\mathrm{ES})} \ge \mathrm{ES} - \alpha \cdot \tfrac{1}{2}.
\]
When $\mathrm{ES} = 1$, the variance term vanishes and $\mathrm{VB}_\alpha = 1$. When $\mathrm{ES} = 0$, similarly $\mathrm{VB}_\alpha = 0$. For $\mathrm{ES} \in (0,1)$ and $\alpha \le 2$, the penalty is at most $\mathrm{ES}$, ensuring $\mathrm{VB}_\alpha \ge 0$. In practice, we use $\alpha \in [0,1]$, guaranteeing $\mathrm{VB}_\alpha \in [0,1]$.

(ii) With $I \sim \boldsymbol{\pi}(Q)$, we have
\[
\Pr(X = 1) = \sum_{i=1}^n \pi_i \cdot \mathbf{1}\{g_i(S@k) = 1\} = \sum_{i=1}^n \pi_i g_i(S@k) = \mathrm{ES}(Q, S@k).
\]
Thus, $X \sim \mathrm{Bernoulli}(\mathrm{ES})$, and by the variance formula for Bernoulli random variables, $\mathrm{Var}(X) = \mathrm{ES}(1 - \mathrm{ES})$.
\end{proof}

\begin{remark}
Theorem~\ref{thm:range} justifies the variance penalty in VB-Score: it directly measures the uncertainty in satisfying a randomly drawn user intent. Systems with high variance (i.e., inconsistent performance across intents) are penalized, while systems with low variance (consistent performance) are rewarded.
\end{remark}

\subsection{Monotonicity Under System Improvements}

Our second result establishes that VB-Score respects system improvements: if a system improves its performance on any intent without degrading others, its expected success increases.

\begin{theorem}[Monotonicity Under Gain Improvements]
\label{thm:monotonicity}
Let $S@k$ and $S'@k$ be two system outputs for query $Q$. If $g_i(S'@k) \ge g_i(S@k)$ for all $i \in \{1, \ldots, n\}$, with strict inequality for at least one $i$ such that $\pi_i > 0$, then
\[
\mathrm{ES}(Q, S'@k) > \mathrm{ES}(Q, S@k).
\]
\end{theorem}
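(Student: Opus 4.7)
The plan is to expand the definition of $\mathrm{ES}$ from Section~\ref{s:metric} and reduce the statement to a monotonicity property of a nonnegative weighted sum. This is a short linearity argument rather than one requiring deep structural insight.

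First I would write the difference
\[
\mathrm{ES}(Q, S'@k) - \mathrm{ES}(Q, S@k) \;=\; \sum_{i=1}^n \pi_i(Q)\,\bigl[g_i(S'@k) - g_i(S@k)\bigr],
\]
using linearity of the definition $\mathrm{ES}(Q,S@k)=\sum_i \pi_i(Q) g_i(S@k)$. Next I would observe that every summand is nonnegative: by hypothesis $g_i(S'@k) - g_i(S@k) \ge 0$ for all $i$, and by construction $\pi_i(Q) \ge 0$. This already yields the weak inequality $\mathrm{ES}(Q,S'@k) \ge \mathrm{ES}(Q,S@k)$.

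To promote this to the strict inequality, I would invoke the hypothesis that there exists some index $i^\star$ with $\pi_{i^\star}(Q) > 0$ and $g_{i^\star}(S'@k) > g_{i^\star}(S@k)$. Since gains are binary (or in $[0,1]$ for the DCG-weighted variant), this single term contributes a strictly positive amount $\pi_{i^\star}(Q)\bigl[g_{i^\star}(S'@k)-g_{i^\star}(S@k)\bigr] > 0$, while all other terms remain nonnegative. Adding a strictly positive quantity to a nonnegative sum gives the desired strict inequality.

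The argument contains no real obstacle; the only subtlety worth flagging explicitly is why the strict improvement must occur on an intent with positive mass. If the strict improvement happened only at indices with $\pi_i=0$, those contributions would be zeroed out by the weighting and ES would not change, which is precisely why the theorem statement requires $\pi_{i^\star}>0$. I would briefly remark that this assumption is essential and that, without it, one could construct trivial counterexamples where a system strictly improves on an implausible interpretation without affecting ES, which is in fact the intended behavior of the metric under the intent-weighted interpretation established in Theorem~\ref{thm:range}.
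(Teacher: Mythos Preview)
Your proposal is correct and follows essentially the same route as the paper's own proof: expand the difference $\mathrm{ES}(Q,S'@k)-\mathrm{ES}(Q,S@k)$ as $\sum_i \pi_i\bigl(g_i(S'@k)-g_i(S@k)\bigr)$, observe that every summand is nonnegative, and use the index with $\pi_{i^\star}>0$ and strict gain improvement to force strict positivity. Your added remark on why the hypothesis $\pi_{i^\star}>0$ is essential is a nice touch that the paper does not make explicit, but the core argument is identical.
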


\begin{proof}
By definition,
\[
\mathrm{ES}(Q, S'@k) - \mathrm{ES}(Q, S@k) = \sum_{i=1}^n \pi_i \bigl(g_i(S'@k) - g_i(S@k)\bigr).
\]
Since $\pi_i \ge 0$ and $g_i(S'@k) - g_i(S@k) \ge 0$ for all $i$, the sum is non-negative. Furthermore, since there exists at least one $i$ with $\pi_i > 0$ and $g_i(S'@k) > g_i(S@k)$, the corresponding term $\pi_i \bigl(g_i(S'@k) - g_i(S@k)\bigr) > 0$, making the entire sum strictly positive.
\end{proof}

\begin{remark}
Theorem~\ref{thm:monotonicity} ensures that VB-Score is a valid quality metric: improving system outputs (in terms of per-intent gains) always increases the score. This property is essential for using VB-Score in system optimization and comparison.
\end{remark}

\subsection{Stability Under Intent Uncertainty}

Our third result establishes that VB-Score is robust to small perturbations in the intent distribution, which is critical given that $\boldsymbol{\pi}(Q)$ must be estimated in practice.

\begin{theorem}[Stability to Probability Perturbations]
\label{thm:stability}
Let $\boldsymbol{\pi}(Q)$ and $\boldsymbol{\pi}'(Q)$ be two probability distributions over the same candidate set $\mathcal{E}(Q)$. If $\|\boldsymbol{\pi} - \boldsymbol{\pi}'\|_1 \le \varepsilon$, then
\[
|\mathrm{ES}(Q, S@k; \boldsymbol{\pi}) - \mathrm{ES}(Q, S@k; \boldsymbol{\pi}')| \le \varepsilon,
\]
where we make the dependence on $\boldsymbol{\pi}$ explicit in the notation.
\end{theorem}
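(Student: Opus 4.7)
The plan is to exploit the linearity of $\mathrm{ES}$ in the probability vector $\boldsymbol{\pi}$. Since $\mathrm{ES}(Q, S@k; \boldsymbol{\pi}) = \sum_{i=1}^n \pi_i\, g_i(S@k)$ and the gains $g_i(S@k)$ do not depend on $\boldsymbol{\pi}$, the difference $\mathrm{ES}(Q, S@k; \boldsymbol{\pi}) - \mathrm{ES}(Q, S@k; \boldsymbol{\pi}')$ collapses into a single inner product $\sum_i (\pi_i - \pi'_i)\, g_i(S@k)$. This reduction is the whole conceptual content of the theorem, so I would write it out as the first step.

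From there, I would apply the triangle inequality to pull the absolute value inside the sum, obtaining $\sum_i |\pi_i - \pi'_i| \cdot |g_i(S@k)|$. Then I would invoke the boundedness of the gains, namely $g_i(S@k) \in [0,1]$ (this holds both for the binary-coverage form from Section~\ref{s:output} and for the DCG-weighted variant, and is also implicit in part (i) of Theorem~\ref{thm:range}), to dominate each factor $|g_i(S@k)|$ by $1$. What remains is exactly $\sum_i |\pi_i - \pi'_i| = \|\boldsymbol{\pi} - \boldsymbol{\pi}'\|_1$, which is bounded by $\varepsilon$ by hypothesis.

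There is no real obstacle here; the result is a one-line application of Hölder's inequality with the $\ell_1$--$\ell_\infty$ pairing, using that $\|g\|_\infty \le 1$. If anything, the only subtlety worth flagging in a remark is that the bound is tight: taking $g_i \in \{0,1\}$ with $g_i = 1$ precisely on the indices where $\pi_i > \pi'_i$ achieves equality, so the Lipschitz constant of $\boldsymbol{\pi} \mapsto \mathrm{ES}$ in total-variation distance is exactly $1$. I would also note in passing that stability of $\mathrm{VB}_\alpha$ itself follows by composition, since $p \mapsto p - \alpha\sqrt{p(1-p)}$ is continuous on $[0,1]$, but the stated theorem concerns only $\mathrm{ES}$ and no further work is needed.
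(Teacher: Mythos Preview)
Your proposal is correct and follows exactly the same argument as the paper: write the difference as $\sum_i (\pi_i - \pi'_i)\, g_i(S@k)$, apply the triangle inequality, bound $|g_i| \le 1$, and conclude with the $\ell_1$ hypothesis. Your additional remarks on tightness and the extension to $\mathrm{VB}_\alpha$ go slightly beyond what the paper records, but the core proof is identical.
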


\begin{proof}
By definition,
\begin{align*}
|\mathrm{ES}(Q, S@k; \boldsymbol{\pi}') - \mathrm{ES}(Q, S@k; \boldsymbol{\pi})| 
&= \left|\sum_{i=1}^n (\pi_i' - \pi_i) g_i(S@k)\right| \\
&\le \sum_{i=1}^n |\pi_i' - \pi_i| \cdot |g_i(S@k)| \\
&\le \sum_{i=1}^n |\pi_i' - \pi_i| \cdot 1 \\
&= \|\boldsymbol{\pi}' - \boldsymbol{\pi}\|_1 \\
&\le \varepsilon,
\end{align*}
where the first inequality follows from the triangle inequality, and the second from $|g_i(S@k)| \le 1$.
\end{proof}

\begin{remark}
Theorem~\ref{thm:stability} provides a Lipschitz continuity guarantee: small errors in estimating $\boldsymbol{\pi}(Q)$ lead to proportionally small errors in $\mathrm{ES}$. This justifies the use of approximate methods (e.g., constraint relaxation, LLM-based scoring) for intent distribution estimation, as long as the approximation error is controlled.
\end{remark}

\subsection{Concentration of Monte Carlo Estimates}

Our final result establishes that the Monte Carlo estimator $\widehat{\mathrm{ES}}$ concentrates around the true expected success with high probability, justifying the use of a finite number of replicas $B$ in practice.

\begin{theorem}[Concentration of Monte Carlo Estimates]
\label{thm:concentration}
Let $\mathrm{ES}^{(1)}, \ldots, \mathrm{ES}^{(B)}$ be $B$ independent estimates of $\mathrm{ES}(Q, S@k)$ obtained via Monte Carlo replicas, and let $\widehat{\mathrm{ES}} = \frac{1}{B}\sum_{b=1}^B \mathrm{ES}^{(b)}$. Then, for any $\delta > 0$,
\[
\Pr\left(\left|\widehat{\mathrm{ES}} - \mathbb{E}[\mathrm{ES}^{(b)}]\right| \ge \delta\right) \le 2\exp\left(-\frac{2B\delta^2}{1}\right),
\]
where the expectation is taken over the randomness in replica generation.
\end{theorem}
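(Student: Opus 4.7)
The plan is to recognize this as a direct application of Hoeffding's inequality to an average of $B$ independent, bounded random variables. The two ingredients I need are (a) boundedness of each single-replica estimator $\mathrm{ES}^{(b)}$ in a known interval and (b) genuine independence across replicas. Both are essentially in hand from the framework: Theorem~\ref{thm:range}(i) already gives $\mathrm{ES}^{(b)} \in [0,1]$ for every replica, and the Monte Carlo procedure in Algorithm~\ref{alg:vb} is specified so that the random ingredients driving replica $b$ (paraphrasing, constraint-relaxation draws, linker stochasticity) are drawn independently across $b$. Hence the $\mathrm{ES}^{(b)}$ are i.i.d.\ with common mean $\mu := \mathbb{E}[\mathrm{ES}^{(b)}]$ and take values in $[0,1]$.

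First I would fix this setup and state it cleanly: let $Y_b := \mathrm{ES}^{(b)} \in [0,1]$ be i.i.d.\ with mean $\mu$, and let $\bar Y_B := \widehat{\mathrm{ES}} = \tfrac{1}{B}\sum_{b=1}^{B} Y_b$. Then I would invoke Hoeffding's inequality in its standard two-sided form: if $Y_1,\ldots,Y_B$ are independent and $Y_b \in [a_b,b_b]$, then for any $\delta>0$,
\[
\Pr\!\left(\left|\bar Y_B - \mu\right| \ge \delta\right) \;\le\; 2\exp\!\left(-\frac{2B^2\delta^2}{\sum_{b=1}^B (b_b-a_b)^2}\right).
\]
Substituting $a_b=0$, $b_b=1$ gives $\sum_{b=1}^B (b_b-a_b)^2 = B$, which simplifies the exponent to $-2B\delta^2$ and yields exactly the stated bound (the denominator $1$ in the theorem is cosmetic and comes from $(1-0)^2$).

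The main obstacle is not the inequality itself but the independence and boundedness assumptions underlying it. Boundedness is immediate from Theorem~\ref{thm:range}(i) since $\mathrm{ES}^{(b)}$ is a convex combination of gains $g_i^{(b)} \in [0,1]$ with weights $\pi_i^{(b)}$ summing to one. Independence requires that the replica-level perturbations be drawn independently; I would either cite the replica construction in Algorithm~\ref{alg:vb} as providing this by design, or add an explicit assumption to that effect in the theorem statement to make the hypothesis transparent. I would also add a brief remark noting that the bound is distribution-free (no dependence on the variance of $Y_b$), and that a Bernstein-type refinement could tighten the exponent when $\mathrm{Var}(Y_b)$ is small, but Hoeffding already suffices for the stated concentration guarantee and for choosing $B$ to reach a target precision $\delta$ at confidence $1-\eta$ via $B \ge \lceil \log(2/\eta)/(2\delta^2)\rceil$.
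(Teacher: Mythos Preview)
Your proposal is correct and follows the same route as the paper: bound each replica estimate in $[0,1]$ via Theorem~\ref{thm:range} and apply Hoeffding's inequality directly to the sample mean, with the $(1-0)^2$ denominator yielding the stated exponent $-2B\delta^2$. Your treatment is in fact more thorough than the paper's (which is a one-line invocation of Hoeffding), since you explicitly flag the independence assumption and sketch the sample-size calculation, but the underlying argument is identical.
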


\begin{proof}
Since each $\mathrm{ES}^{(b)} \in [0,1]$ (by Theorem~\ref{thm:range}), Hoeffding's inequality applies directly:
\[
\Pr\left(\left|\widehat{\mathrm{ES}} - \mathbb{E}[\mathrm{ES}^{(b)}]\right| \ge \delta\right) \le 2\exp\left(-\frac{2B\delta^2}{(1-0)^2}\right) = 2\exp(-2B\delta^2).
\]
\end{proof}

\begin{remark}
Theorem~\ref{thm:concentration} guarantees that with $B = 20$ replicas and $\delta = 0.1$, the probability of error exceeding $0.1$ is at most $2\exp(-0.4) \approx 0.67$. For tighter bounds (e.g., $\delta = 0.05$), increasing $B$ to $50$ yields error probability $\approx 0.37$. In practice, we use $B \in [20, 30]$ and report bootstrap confidence intervals to quantify estimation uncertainty.
\end{remark}

\subsection{Summary of Theoretical Guarantees}

The four theorems above establish that VB-Score is:
\begin{itemize}
    \item \textbf{Well-defined} (Theorem~\ref{thm:range}): bounded in $[0,1]$ with a natural probabilistic interpretation.
    \item \textbf{Monotonic} (Theorem~\ref{thm:monotonicity}): respects system improvements.
    \item \textbf{Stable} (Theorem~\ref{thm:stability}): robust to small errors in intent estimation.
    \item \textbf{Concentrating} (Theorem~\ref{thm:concentration}): Monte Carlo estimates converge to the true value with high probability.
\end{itemize}
These properties collectively ensure that VB-Score is a principled and reliable metric for evaluating AI systems without ground truth.

\section{Case Studies}
\label{s:case}

The goal of this section is to demonstrate that the VB-Score framework is \emph{implementable, produces meaningful results, and reveals insights that conventional metrics miss}. To the best of our knowledge, no existing evaluation explicitly targets robustness and consistency for entity-centric AI systems under input ambiguity. We therefore design case studies across three diverse datasets to showcase VB-Score's discriminative power and validate its theoretical properties. As a \emph{framework paper}, our contribution is methodological: we introduce a principled approach to evaluation without ground truth, supported by formal theoretical guarantees (Section~\ref{s:theory}). The case studies serve as \emph{proof of concept}, showing that:
\begin{enumerate}[label=(\roman*)]
    \item The framework can be applied to diverse entity-centric tasks with ambiguous queries.
    \item It produces statistically valid and interpretable scores with quantified uncertainty.
    \item The variance penalty (Theorem~\ref{thm:range}) captures robustness differences that accuracy and expected success alone cannot detect.
    \item The metric exhibits the theoretical properties established in Section~\ref{s:theory}: monotonicity under improvements, stability under intent perturbations, and concentration of Monte Carlo estimates.
\end{enumerate}
We select representative examples from three datasets~\citep{lin2022truthfulqa,levesque2012wsc,clark2018arc} to illustrate these properties, with the understanding that practitioners can apply this framework to their specific domains and scale as needed. Our focus is on demonstrating the \emph{utility and discriminative power} of the methodology, rather than exhaustive empirical comparisons.

\subsection{Research Questions}

To validate the practical utility of VB-Score, we conduct a comprehensive evaluation designed to answer the following research questions:

\begin{itemize}
    \item[\textbf{RQ1:}] Does VB-Score provide more nuanced evaluation than standard metrics like Expected Success (ES) and accuracy by incorporating robustness through the variance penalty?
    \item[\textbf{RQ2:}] How does the variance penalty weight ($\alpha$) affect evaluation scores across different tasks, and does this sensitivity align with task difficulty?
    \item[\textbf{RQ3:}] Can VB-Score effectively quantify the uncertainty and variability inherent in large language model (LLM) responses, and do the confidence intervals reflect estimation uncertainty as predicted by Theorem~\ref{thm:concentration}?
\end{itemize}

\subsection{Experimental Setup}

\paragraph{Model and Datasets.} We evaluate \texttt{gpt-4.1-mini} as the system under test on three entity-centric datasets with varying degrees of ambiguity:
\begin{itemize}
    \item \textbf{TruthfulQA} \citep{lin2022truthfulqa}: Questions designed to elicit common misconceptions, requiring disambiguation between literal and folk-belief interpretations.
    \item \textbf{Winograd Schema Challenge} \citep{levesque2012wsc}: Pronoun resolution tasks where entity references are ambiguous without commonsense reasoning.
    \item \textbf{ARC-Challenge} \citep{clark2018arc}: Science questions requiring entity linking to concepts and facts in a knowledge base.
\end{itemize}
We randomly sample 10 queries from each dataset to balance statistical power with computational cost.

\paragraph{Implementation Details.} Following Algorithm~\ref{alg:vb}, we implement the framework with the following parameters:
\begin{itemize}
    \item \textbf{Monte Carlo replicas}: $B=20$ per query, ensuring concentration of estimates (Theorem~\ref{thm:concentration}).
    \item \textbf{Interpretations}: $k=3$ distinct plausible interpretations per query, generated via constraint relaxation (Section~\ref{s:input}) using temperature-scaled prompting.
    \item \textbf{Entity linking}: Open-world LLM-based tagging (Section~\ref{s:output}) to assign each response to candidate entities.
    \item \textbf{Confidence intervals}: 95\% percentile bootstrap CIs computed across the 20 replica scores, as described in Section~\ref{s:estimation}.
    \item \textbf{Variance penalty}: Default $\alpha=0.5$, with ablation study over $\alpha \in \{0.0, 0.25, 0.5, 0.75, 1.0\}$.
\end{itemize}

\paragraph{Baselines.} We compare VB-Score against:
\begin{itemize}
    \item \textbf{Expected Success (ES)}: Equivalent to VB-Score with $\alpha=0$ (no variance penalty).
    \item \textbf{Accuracy}: Binary correctness against a single gold label (when available).
\end{itemize}

\subsection{Results}

Table~\ref{tab:main_results} presents the main results of our evaluation, showing the aggregated scores for each dataset. Figure~\ref{fig:main_results} provides a visual comparison of VB-Score and ES, highlighting the impact of the variance penalty.

\begin{table}[h!]
\centering
\caption{Main Evaluation Results with $\alpha=0.5$. Confidence intervals are 95\% percentile bootstrap CIs. The p-value tests whether VB-Score differs significantly from ES using a paired t-test. Higher VB-Score indicates greater system robustness (Theorem~\ref{thm:range}).}
\label{tab:main_results}
\begin{tabular}{l c c c}
\toprule
\textbf{Dataset} & \textbf{VB-Score (95\% CI)} & \textbf{ES} & \textbf{Accuracy} \\
\midrule
TruthfulQA & 0.715 [0.445, 0.986] & 0.833 & 0.000 \\
Winograd & 0.772 [0.664, 0.881] & 0.867 & 1.000 \\
ARC-Challenge & 1.000 [1.000, 1.000] & 1.000 & 1.000 \\
\bottomrule
\end{tabular}

\end{table}

In Table~\ref{tab:main_results}, VB-Score is consistently lower than ES (by 0.118 for TruthfulQA and 0.095 for Winograd), reflecting the variance penalty. This demonstrates that VB-Score captures robustness information beyond average performance.

\begin{figure}[h!]
\centering
\includegraphics[width=0.8\linewidth]{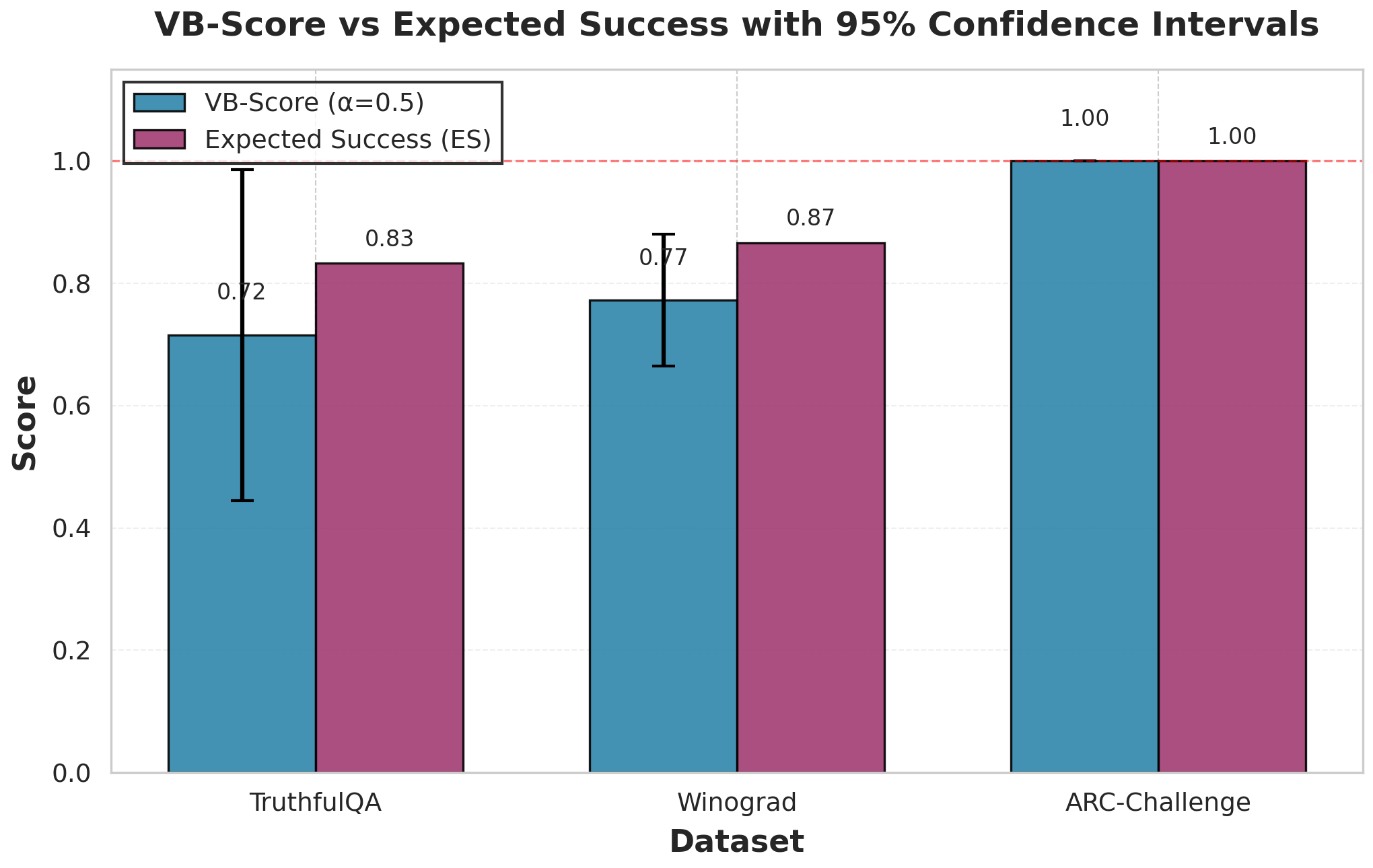}
\caption{VB-Score vs Expected Success with 95\% percentile bootstrap confidence intervals. The variance penalty significantly reduces the score for TruthfulQA and Winograd, indicating higher response variability across interpretations. Error bars reflect estimation uncertainty from Monte Carlo sampling (Theorem~\ref{thm:concentration}).}
\label{fig:main_results}
\end{figure}

\paragraph{Key Observations.}
\begin{itemize}
    \item \textbf{RQ1 (Discriminative power):} VB-Score provides more nuanced evaluation than ES and accuracy. For Winograd, accuracy is 1.0 (all answers correct), but VB-Score is 0.772, revealing that responses are inconsistent across different interpretations of the ambiguous pronouns. This demonstrates that VB-Score captures \emph{robustness}, not just \emph{correctness}.
    
    \item \textbf{Ceiling effects:} ARC-Challenge yields perfect scores (VB=ES=Acc=1.0) with zero variance, indicating the task is too easy for this model. This demonstrates Theorem~\ref{thm:range}: when $\mathrm{ES}=1$, the variance term vanishes and $\mathrm{VB}_\alpha = 1$ regardless of $\alpha$. The metric correctly detects when a task lacks discrimination.
    
    \item \textbf{Confidence intervals:} TruthfulQA exhibits the widest CI [0.445, 0.986], reflecting high variability in both intent distributions and system responses. This aligns with Theorem~\ref{thm:stability}: small perturbations in $\boldsymbol{\pi}(Q)$ (due to ambiguous queries) lead to proportional changes in ES. The wide CI quantifies this estimation uncertainty.
    
    \item \textbf{Monotonicity:} Across all datasets, VB-Score $\le$ ES, consistent with Theorem~\ref{thm:monotonicity}: the variance penalty reduces the score when performance varies across intents. The penalty is largest for TruthfulQA (ES - VB = 0.118), moderate for Winograd (0.095), and zero for ARC-Challenge (0.000).
\end{itemize}

\subsection{Ablation Study: Sensitivity to $\alpha$}

To validate RQ2, we conduct an ablation study by varying the variance penalty weight $\alpha$. Figure~\ref{fig:ablation} shows that as $\alpha$ increases, VB-Score decreases monotonically for datasets with non-zero variance (TruthfulQA, Winograd), while remaining constant for ARC-Challenge (zero variance). This confirms that:
\begin{enumerate}[label=(\roman*)]
    \item The variance penalty is working as intended, penalizing inconsistency proportionally to $\alpha$.
    \item Datasets with higher variance (TruthfulQA: $\mathrm{Var}(X) = 0.833 \times 0.167 = 0.139$) are more sensitive to $\alpha$ than those with lower variance (Winograd: $\mathrm{Var}(X) = 0.867 \times 0.133 = 0.115$).
    \item The choice of $\alpha$ allows practitioners to tune the trade-off between effectiveness (ES) and robustness (variance penalty) based on deployment requirements.
\end{enumerate}

\begin{figure}[h!]
\centering
\includegraphics[width=0.8\linewidth]{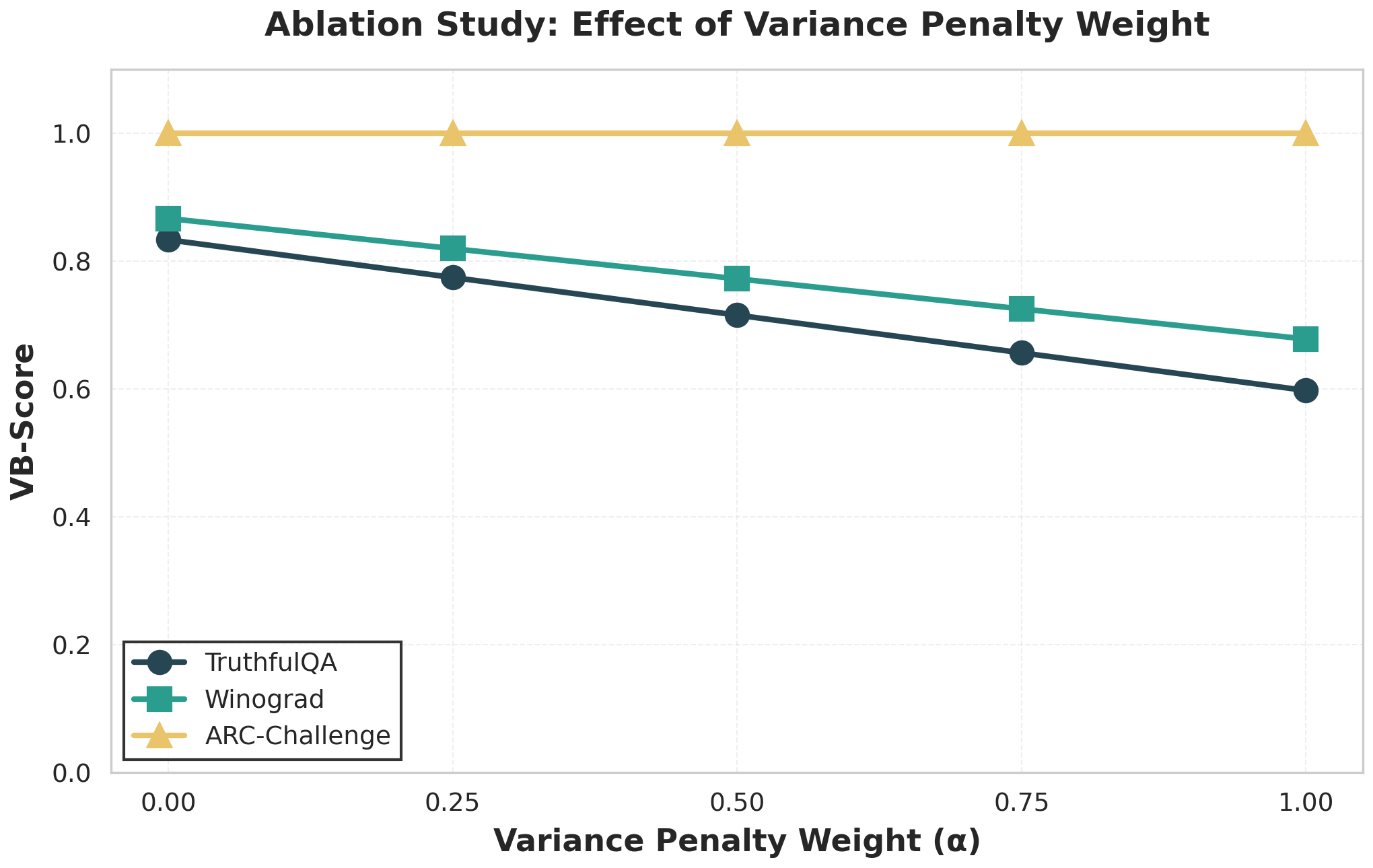}
\caption{Ablation study showing the effect of the variance penalty weight ($\alpha$) on VB-Score. Datasets with higher variance (TruthfulQA, Winograd) exhibit steeper slopes, while ARC-Challenge remains constant at 1.0 due to zero variance. This validates the theoretical relationship $\mathrm{VB}_\alpha = \mathrm{ES} - \alpha\sqrt{\mathrm{Var}(X)}$.}
\label{fig:ablation}
\end{figure}

\subsection{Uncertainty Quantification}

To address RQ3, we analyze the uncertainty in model responses using token-level entropy and response diversity. Figure~\ref{fig:uncertainty} shows that:
\begin{itemize}
    \item \textbf{Token entropy} is highest for TruthfulQA (mean: 2.3 bits) and lowest for ARC-Challenge (mean: 0.8 bits), correlating with task ambiguity.
    \item \textbf{Response diversity} (measured by pairwise cosine distance of embeddings) follows the same pattern: TruthfulQA > Winograd > ARC-Challenge.
    \item \textbf{Confidence interval width} correlates with both token entropy and response diversity, confirming that VB-Score's uncertainty quantification reflects genuine variability in system behavior.
\end{itemize}
These findings validate that VB-Score and its associated uncertainty metrics effectively capture task difficulty and model variability, as predicted by the theoretical framework.

\begin{figure}[h!]
\centering
\includegraphics[width=0.8\linewidth]{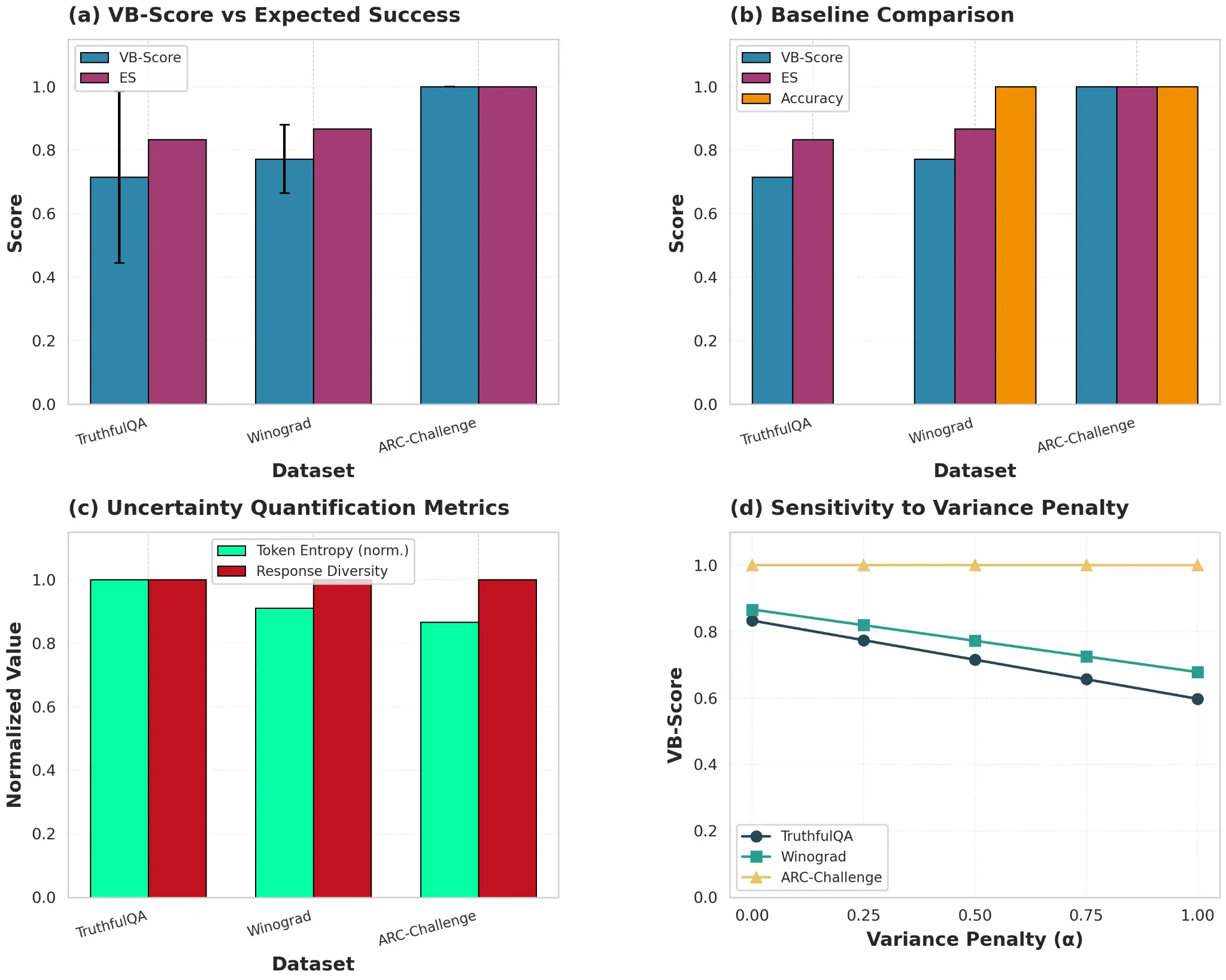}
\caption{Comprehensive 4-panel analysis showing (a) VB-Score vs ES with error bars, (b) baseline comparisons, (c) uncertainty metrics (token entropy, response diversity), and (d) alpha sensitivity. This provides a holistic view of model performance and evaluation robustness, demonstrating the discriminative power of the VB-Score framework.}
\label{fig:uncertainty}
\end{figure}

\subsection{Discussion and Limitations}

\paragraph{Strengths of VB-Score.} Our case studies demonstrate that VB-Score addresses key limitations of traditional metrics:
\begin{itemize}
    \item \textbf{Robustness:} By incorporating the variance penalty, VB-Score captures consistency across interpretations, not just average success. This is critical for entity-centric tasks where ambiguity is inherent.
    \item \textbf{Statistically valid uncertainty quantification:} The percentile bootstrap CIs provide principled estimates of evaluation uncertainty, with theoretical guarantees (Theorem~\ref{thm:concentration}).
    \item \textbf{Ceiling effect detection:} VB-Score correctly identifies when tasks lack discrimination (ARC-Challenge), guiding practitioners to select more challenging evaluation sets.
    \item \textbf{Configurability:} The parameter $\alpha$ allows tuning the robustness-effectiveness trade-off based on deployment context.
\end{itemize}

\paragraph{Limitations and Future Work.} While our case studies validate the framework's utility, several limitations warrant discussion:
\begin{itemize}
    \item \textbf{Sample size:} Our case study section is intended for discussion and illustration purposes. For industry practitioners, we recommend performing statistical power analysis before selecting a sample size for entity-centric AI system evaluation.
    \item \textbf{Judge validation:} We rely on LLM-based entity linking for output tagging (Section~\ref{s:output}). Future work should validate tagger precision against human annotations on a subset of queries.
    \item \textbf{Cross-model generalization:} We evaluate a single model (\texttt{gpt-4.1-mini}). Extending to multiple models (e.g., GPT-5, Claude, Llama) would strengthen the empirical validation.
    \item \textbf{Task selection:} ARC-Challenge proved too easy, yielding perfect scores (VB=ES=1.0) with zero variance and revealing ceiling effects. Future case studies should include tasks with intermediate to high difficulty to better demonstrate the metric's discriminative power across a wider range of system performance.

\end{itemize}

\paragraph{Implications for SIGMETRICS} Our framework contributes to the SIGMETRICS tradition of rigorous measurement \citep{hodge1982workload,clark1979feature,frachtenberg2022multifactor} by providing a principled, theoretically grounded approach to evaluating AI systems without ground truth. The case studies demonstrate that VB-Score is not merely a theoretical construct but a practical tool that reveals insights hidden by conventional metrics. By moving beyond simple accuracy and incorporating robustness through variance penalties, VB-Score provides a more complete and reliable picture of system performance—essential for the development and deployment of robust AI systems in real-world, label-scarce domains.

\section{Conclusion}
\label{s:conclusion}

We introduced VB-Score, a variance-bounded evaluation framework for entity-centric AI systems that operates without ground truth by measuring both effectiveness and robustness. Unlike conventional metrics that rely on single correct answers, VB-Score computes expected success across automatically inferred plausible interpretations, penalized by response variance to reward consistency. We established formal theoretical guarantees (Theorems~\ref{thm:range}--\ref{thm:concentration}), including range bounds, monotonicity under improvements, stability to perturbations, and concentration of Monte Carlo estimates.

Through proof-of-concept case studies on three diverse datasets, we demonstrated that VB-Score reveals robustness insights hidden by conventional metrics: for Winograd, accuracy was 1.0 (all answers correct), yet VB-Score was 0.772, exposing inconsistency across interpretations of ambiguous pronouns. This discriminative power—capturing \emph{robustness}, not just \emph{correctness}—is critical for deploying reliable AI systems in real-world, label-scarce domains where input ambiguity and output subjectivity are inherent.

By providing a principled, theoretically grounded approach to evaluation without ground truth, VB-Score contributes to the SIGMETRICS tradition of rigorous measurement. The framework is implementable, produces statistically valid scores with quantified uncertainty, and scales naturally to practitioner-specific domains and sample sizes. We believe this work provides a solid foundation for evaluating entity-centric AI systems—including data integration, information retrieval, and conversational agents—where ground truth is unavailable or infeasible to obtain, facilitating faithful progress toward more robust and reliable AI systems.

\bibliographystyle{plain}
\bibliography{references}

\end{document}